\documentclass{article}
\newif\ifneuripsstyle
\IfFileExists{neurips_2026.sty}{
    \neuripsstyletrue
    \usepackage[preprint]{neurips_2026}
}{
    \neuripsstylefalse
    \usepackage[margin=1in]{geometry}
    \usepackage{times}
    \usepackage[numbers]{natbib}
}
\usepackage{amsmath,amssymb,amsthm}
\usepackage{booktabs}
\usepackage{array,tabularx,colortbl}
\usepackage{graphicx}
\usepackage{hyperref}
\usepackage{xcolor}
\usepackage{enumitem}
\usepackage{microtype}
\usepackage{bbm}
\usepackage{pifont}
\usepackage{multirow}
\usepackage{tikz}
\usepackage{pgfplots}
\pgfplotsset{compat=1.18}
\usetikzlibrary{positioning,arrows.meta,fit,backgrounds}

\newtheorem{proposition}{Proposition}

\definecolor{saBlue}{HTML}{0A66C2}
\definecolor{saGreen}{HTML}{059669}
\definecolor{saRed}{HTML}{D62728}
\definecolor{saNavy}{HTML}{1F2A44}
\definecolor{saGrpoBar}{HTML}{9AA3AF}
\definecolor{saOrange}{HTML}{DD7A33}
\definecolor{saLightBlue}{HTML}{E8F1FB}
\definecolor{saLightGreen}{HTML}{DFF0E8}
\definecolor{saLightGray}{HTML}{EEF2F7}
\definecolor{saPurple}{HTML}{7655A5}
\definecolor{saLightPurple}{HTML}{F2EDF7}
\definecolor{saRule}{HTML}{CAD2DD}
\tikzset{
  corebox/.style={rounded corners=3pt, draw=gray!55, fill=saLightGray, align=center, font=\footnotesize, inner sep=5pt},
  seg/.style={rounded corners=2pt, minimum width=1.05cm, minimum height=0.62cm, font=\scriptsize},
  mixbox/.style={rounded corners=3pt, draw=saGreen, fill=saLightGreen, align=center, font=\scriptsize, inner sep=3pt},
  flowarr/.style={-{Stealth[length=6pt]}, draw=gray!60, line width=1pt},
}

\newcommand{\method}{SHARPO}
\newcommand{\student}{\pi_{\theta}}          % student policy
\newcommand{\teacher}{\pi_{\bar{\theta}}}    % stale teacher = frozen copy of the same policy
\newcommand{\sign}{\operatorname{sign}}
\newcommand{\clip}{\operatorname{clip}}

\title{\method{}: Segment-Level Credit Assignment for Agentic Reinforcement Learning}

\author{
    Xinchen Du$^{1,2}$\thanks{Work done during an internship at LinkedIn Corporation.} \quad Zhengze Zhou$^{1}$ 
    \quad Wenhui Zhu$^{1}$ \\ \bfseries Han Yu$^{1}$ 
    \quad Sen Na$^{2}$ 
    \quad Rohit Jain$^{1}$
    \quad Alborz Geramifard$^{1}$ \\
    \vspace{0.2cm}
    $^{1}$LinkedIn Corporation \quad
    $^{2}$Georgia Institute of Technology
}

\date{}

\begin{document}
\maketitle

\begin{abstract}
Agentic reinforcement learning (RL) trains a large language model (LLM) to act over long, multi-step interactions. However, a single localized error can cause task failure, while trajectory-level rewards provide limited guidance for assigning credit to individual decisions. To address this limitation, we introduce Segment-level Hindsight Advantage Reweighting for Policy Optimization (\method{}), a credit-assignment mechanism that refines Group Relative Policy Optimization (GRPO) at the level of environment-facing segments. Inspired by the existing on-policy self-distillation (OPSD) method, \method{} computes teacher--student log-probability gaps within each segment and uses the resulting signal to compute a bounded multiplier on the GRPO advantage. This multiplier is shared by all tokens within the segment, allowing credit to vary across different segments. With Qwen2.5-7B-Instruct, \method{} outperforms existing baselines on the ALFWorld and WebShop benchmarks, including GRPO, SDAR, RLSD, and StepOPSD.
\end{abstract}

\section{Introduction}
\label{sec:intro}

Post-training plays a key role in adapting large language models (LLMs) to specific tasks and improving their performance in interactive environments \citep{ouyang2022training,jin2025search}.
Reinforcement learning (RL) supports this adaptation by optimizing agent behavior using feedback from task outcomes.
When these outcomes can be automatically verified, their correctness provides a direct reward signal for training, as demonstrated in mathematical reasoning and search-based question answering \citep{deepseek2025r1,jin2025search}.
Group Relative Policy Optimization (GRPO) offers an efficient approach to learning from such rewards without training a separate value network \citep{shao2024deepseekmath}.
For each task, GRPO samples a group of rollouts and normalizes their rewards within the group to estimate advantages.
Under outcome supervision, each rollout receives a single advantage that is shared by all of its generated tokens.

In multi-turn interactions, however, an agent must make a sequence of decisions before receiving feedback on the final outcome.
In this case, the rewards are sparse and are available only at the end of an episode, hence they provide little direct information about the contribution of each intermediate step to that outcome.
For example, a shopping agent may find a suitable product but select an incorrect option before completing the purchase.
If the resulting trajectory receives a negative advantage, both the useful search and the mistaken selection inherit the same negative coefficient.
The shared advantage therefore does not distinguish actions that helped accomplish the task from those that led to failure.
This motivates us to design a credit-assignment mechanism in RL that uses finer-grained signals to refine the credit assigned to individual decisions within a trajectory.

One source of such fine-grained signals is on-policy distillation (OPD), where a student learns from a teacher's token-level predictions on responses generated by the student itself \citep{agarwal2024onpolicy}.
This provides dense supervision along the student's own generation process, but typically relies on a separate, often larger, teacher model.
On-policy self-distillation (OPSD) removes this dependence by using the same LLM as both teacher and student under different conditioning contexts \citep{zhao2026opsd}.
The teacher receives privileged information, such as a reference solution, and guides the student on its own rollouts.
Therefore, OPSD provides dense training feedback without requiring an external teacher model.

Inspired by OPSD, our \method{} reuses a successful rollout from the same GRPO group to show the teacher how the task can be completed.
The teacher uses this additional information to evaluate the student's behavior and provide feedback for credit assignment within the trajectory.
Concretely, we instantiate the teacher as a copy of the agent's policy and compute teacher--student log-probability differences.
These differences indicate how strongly the teacher supports the sampled behavior relative to the student.
We then use this signal to adjust the magnitude of GRPO's outcome-derived advantages. To sum up, our method \method{} combines fine-grained guidance from self-distillation with task-level supervision from environmental rewards.

In an interactive task, a trajectory contains semantically meaningful spans, such as agent reasoning, tool calls, and actions.
We refer to these spans as interaction segments and use those as the units for refining credit assignment. For each selected segment, \method{} assigns a weight based on the teacher--student log-probability difference for that segment. This weight scales the GRPO advantage uniformly within its segment, refining how the trajectory-level reward outcome is credited to intermediate substeps.
By incorporating the teacher's assessment of each substep, \method{} provides local guidance on the strength of reinforcement or penalization, alleviating the credit-assignment problem caused by sparse terminal rewards.

This segment-level formulation distinguishes \method{} from related approaches in how teacher feedback is organized and used for credit assignment. Unlike RLSD \citep{yang2026rlsd}, which reweights token-level advantages using self-divergence, or SDAR \citep{lu2026sdar}, which applies soft token-level gating to an auxiliary distillation loss over the full response, \method{} aggregates teacher feedback within selected interaction segments and assigns a shared advantage weight to each segment. Compared to StepOPSD \citep{zhang2026stepopsd}, which retains token-specific weights within selected action-centered spans, \method{} aggregates the feedback within each segment before determining its weight. Favorable and unfavorable evidence across a substep is therefore considered jointly, allowing the credit adjustment to reflect the teacher's overall support for the complete decision.

We evaluate \method{} on ALFWorld and WebShop using Qwen2.5-7B-Instruct.
As shown in Figure~\ref{fig:core}, \method{} achieves higher final validation success rates than baselines on both benchmarks at both parameter configurations. To summarize, our main contributions are threefold:

\begin{figure}[t]
\hypersetup{hidelinks}
\centering
\resizebox{\linewidth}{!}{%
\begin{tikzpicture}[x=1cm,y=.78cm,font=\sffamily\footnotesize,text=saNavy,
  line cap=round,line join=round]
\foreach \dx/\bench/\g/\b/\o in {0/ALFWorld/70.57/84.38/84.90,7.15/WebShop/66.15/75.26/75.00}{
 \begin{scope}[xshift=\dx cm]
  \filldraw[fill=saLightGray!30,draw=saRule!75,rounded corners=7pt,line width=.55pt] (0,0) rectangle (6.65,4.18);
  \fill[white,rounded corners=5pt] (.55,.48) rectangle (6.31,3.09);
  \node[anchor=west,font=\sffamily\small\bfseries] at (1.24,3.72) {\bench};
  \node[anchor=east,font=\sffamily\scriptsize,text=saNavy!75] at (6.27,3.72) {Success rate (\%)};
  \foreach \v in {50,60,70,80,90}{
   \pgfmathsetmacro{\yy}{.5+(\v-50)*.06}
   \draw[saRule!65,line width=.35pt] (.65,\yy) -- (6.27,\yy);
   \node[anchor=east,font=\sffamily\scriptsize,text=saNavy!65] at (.52,\yy) {\v};
  }
  \foreach \xx/\val/\col in {1.57/\g/saGrpoBar,3.47/\b/saBlue,5.37/\o/saOrange}{
   \pgfmathsetmacro{\hh}{.5+(\val-50)*.06}
   \fill[\col,rounded corners=2pt] ({\xx-.42},.5) rectangle ({\xx+.42},\hh);
   \node[anchor=south,font=\sffamily\footnotesize\bfseries,text=\col!80!black,inner sep=2pt] at (\xx,{\hh+.08}) {\val};
  }
  \draw[saNavy!60,densely dashed,line width=.7pt] (.65,{.5+(\g-50)*.06}) -- (6.27,{.5+(\g-50)*.06});
  \draw[saRule,line width=.6pt] (.65,.5) -- (6.27,.5);
 \end{scope}
}
% Benchmark pictograms sit outside the quantitative plotting area.
\begin{scope}[shift={(.65,3.73)},scale=.67]
 \fill[saLightBlue] (-.47,-.4) rectangle (.49,.2);
 \draw[saBlue,line width=1.05pt,fill=saLightBlue] (-.42,-.35) -- (-.42,.08) -- (0,.42) -- (.42,.08) -- (.42,-.35) -- cycle;
 \draw[saBlue,line width=1.4pt] (-.55,.02) -- (0,.48) -- (.55,.02);
 \fill[saBlue!75,rounded corners=.7pt] (-.08,-.35) rectangle (.13,-.01);
 \fill[saOrange!70] (-.31,-.07) rectangle (-.17,.07);
 \fill[saOrange!70] (.23,-.07) rectangle (.35,.07);
\end{scope}
\begin{scope}[shift={(7.79,3.73)},scale=.67]
 \draw[saPurple,line width=1pt,fill=saLightPurple,rounded corners=1.4pt] (-.48,-.36) rectangle (.48,.36);
 \fill[saPurple!60] (-.48,.14) rectangle (.48,.27);
 \foreach \xx in {-.38,-.08,.22}{\fill[saOrange!80] (\xx,.14) rectangle ++(.15,.13);}
 \draw[saPurple,line width=.8pt] (-.29,-.02) -- (-.21,-.02) -- (-.13,-.2) -- (.19,-.2) -- (.26,.03) -- (-.19,.03);
 \fill[saPurple] (-.08,-.29) circle (.035);\fill[saPurple] (.17,-.29) circle (.035);
\end{scope}
\end{tikzpicture}}
\vspace{4pt}
{\footnotesize\sffamily
\textcolor{saGrpoBar}{\rule{7pt}{7pt}}~GRPO\quad
\textcolor{saBlue}{\rule{7pt}{7pt}}~\method{} ($\lambda{=}0.2$)\quad
\textcolor{saOrange}{\rule{7pt}{7pt}}~\method{} ($\lambda{=}0.05$)
}
\caption{\textbf{Final validation success (\%) with Qwen2.5-7B-Instruct.} The dashed line marks GRPO. \method{} improves on GRPO at both mixing weights on both benchmarks, reaching $84.90\%$ on ALFWorld ($+14.32$ points) and $75.26\%$ on WebShop ($+9.11$ points). Bars are means over three independent runs at training step 200; the vertical axis starts at $50\%$. Table~\ref{tab:main-results} reports all baselines with standard deviations.}
\label{fig:core}
\end{figure}

\begin{itemize}
    \item We propose \method{}, a segment-level credit-assignment method that uses successful peer rollouts to guide the weighting of intermediate decisions in GRPO.
    By aggregating self-distillation feedback over meaningful interaction segments, it supplements sparse outcome supervision without requiring an external teacher or additional environment rollouts.

    \item We provide a theoretical explanation of why this self-distillation feedback is useful in our method and how it supports finer-grained credit assignment.

    \item We demonstrate that \method{} outperforms GRPO, RLSD, SDAR, and StepOPSD on both ALFWorld and WebShop.
    Analyses of subtask success rates and product-matching quality further characterize these gains.
\end{itemize}

\section{Method}
\label{sec:method}

We first review GRPO and OPSD in Section~\ref{sec:background} and then present \method{} in Section~\ref{sec:algorithm}.
Section~\ref{sec:theory} provides a theoretical analysis of our segment-level credit assignment.

\subsection{Preliminaries: GRPO and OPSD}
\label{sec:background}

We consider an LLM agent initialized from a pretrained model, with trainable parameters $\theta$.
Its autoregressive policy $\student$ defines a probability distribution over the next token given the available context.
Starting from a task prompt $x$, the agent generates responses and receives observations from the environment after executing actions.
Together, these responses and observations form an interaction trajectory $\tau$.
Let $y_\ell$ denote the $\ell$-th token generated by the agent across $\tau$, and let $T$ be the total number of such tokens.
Each token is sampled as $y_\ell \sim \student(\cdot \mid c_\ell)$, where $c_\ell$ contains the prompt $x$, previously generated tokens, and environment observations available before $y_\ell$.
At the end of the episode, a verifier assigns $r(\tau)=1$ if the task succeeds and $0$ otherwise.
We use RL post-training methods to update the policy parameters $\theta$ so as to maximize the expected terminal reward.

\paragraph{Group Relative Policy Optimization (GRPO).}
GRPO \citep{shao2024deepseekmath} samples $G$ trajectories $\{\tau_i\}_{i = 1}^G$ for the same task using the behavior policy $\pi_{\theta_{\mathrm{old}}}$, where $\theta_{\mathrm{old}}$
denotes the parameters saved before sampling and held fixed during subsequent updates. For rewards $r_i=r(\tau_i)$, it computes the trajectory advantage
\begin{equation}
\label{eq:grpo-adv}
A_i =
\frac{r_i-\operatorname{mean}(r_1,\ldots,r_G)}
{\operatorname{std}(r_1,\ldots,r_G)+\epsilon}.
\end{equation}
Here $\epsilon>0$ is a numerical stabilizer, and
the advantage $A_i$ measures whether a trajectory performs better or worse than its group average and is shared by all $T_i$ generated tokens in that trajectory. We write
$\omega_{i,\ell}(\theta)=
\student(y_{i,\ell}\mid c_{i,\ell})/
\pi_{\theta_{\mathrm{old}}}(y_{i,\ell}\mid c_{i,\ell})$
for the probability ratio between the current and behavior
policies. Hence we have
\begin{equation}
\label{eq:token-surrogate}
\ell_{i,\ell}(\theta;A)
=
\min\!\left\{
\omega_{i,\ell}(\theta)A,\;
\clip\!\big(
\omega_{i,\ell}(\theta),1-\varepsilon,1+\varepsilon
\big)A
\right\},
\end{equation}
where the $\clip$ function truncates the ratio to
$[1-\varepsilon,1+\varepsilon]$. Finally,
GRPO maximizes
\begin{equation}
\label{eq:grpo-obj}
\mathcal J_{\mathrm{GRPO}}(\theta)
=
\mathbb E\!\left[
\frac1G\sum_{i=1}^G
\frac1{T_i}\sum_{\ell=1}^{T_i}
\ell_{i,\ell}(\theta;A_i)
\right]
-\beta_{\mathrm{KL}}\mathcal K(\theta),
\end{equation}
where the expectation is over tasks and sampled trajectories,
and $\varepsilon$ is the ratio-clipping radius.
The term $\mathcal K(\theta)$ denotes the sampled KL penalty
against a fixed reference policy $\pi_{\mathrm{ref}}$,
with its strength controlled by $\beta_{\mathrm{KL}}\geq0$.
Group normalization provides the baseline without requiring
a learned value function.

\paragraph{On-Policy Self-Distillation (OPSD).}
OPSD \citep{zhao2026opsd} uses the same LLM as student and teacher under different conditioning contexts. For a prompt $x$ with reference information $h$, the student samples a response $y$ from $\student(\cdot\mid x)$, while the teacher additionally receives $h$. A basic distillation objective is
\begin{equation}
\label{eq:opsd}
\mathcal L_{\mathrm{OPSD}}(\theta)=\mathbb E_{x,h,y}\!\left[\frac1{|y|}\sum_{\ell=1}^{|y|}D\!\left(\pi_\theta(\cdot\mid c_\ell\oplus h)\,\|\,\student(\cdot\mid c_\ell)\right)\right],
\end{equation}
where $c_\ell=(x,y_{<\ell})$, $c_\ell\oplus h=(x\oplus h,y_{<\ell})$ augments the prompt with reference information while retaining the sampled prefix, and $D$ is a distributional divergence. The sampled prefixes and teacher distributions are held fixed when differentiating this objective, hence gradients flow only through the student predictions. Evaluating student-generated prefixes supplies local supervision without an external teacher model.

With terminal rewards, GRPO assigns the same trajectory
advantage in Eq.~\eqref{eq:grpo-adv} to every generated token.
This shared signal cannot distinguish useful intermediate
decisions from local mistakes within the same rollout.
OPSD provides an additional source of supervision by
conditioning the teacher on a reference solution.
The teacher can provide different guidance at different
positions in the student's response, supplying local
information that the terminal reward alone does not provide.
Inspired by this idea, \method{} uses successful peer
rollouts to obtain teacher feedback and aggregates it
within interaction segments.
This feedback adjusts how strongly each segment is
reinforced or penalized, offering an approach to addressing the credit-assignment problem under sparse terminal rewards.

\subsection{\method{}}
\label{sec:algorithm}

After rollout collection and reward evaluation, \method{} uses teacher feedback to reweight the GRPO base advantage at the segment level.
The adjusted advantages are then used in the objective in Eq.~\eqref{eq:grpo-obj}, retaining its clipped surrogate and KL regularization.
Figure~\ref{fig:pipeline} illustrates the overall training procedure.

\begin{figure}[t]
\centering
\includegraphics[width=\linewidth]{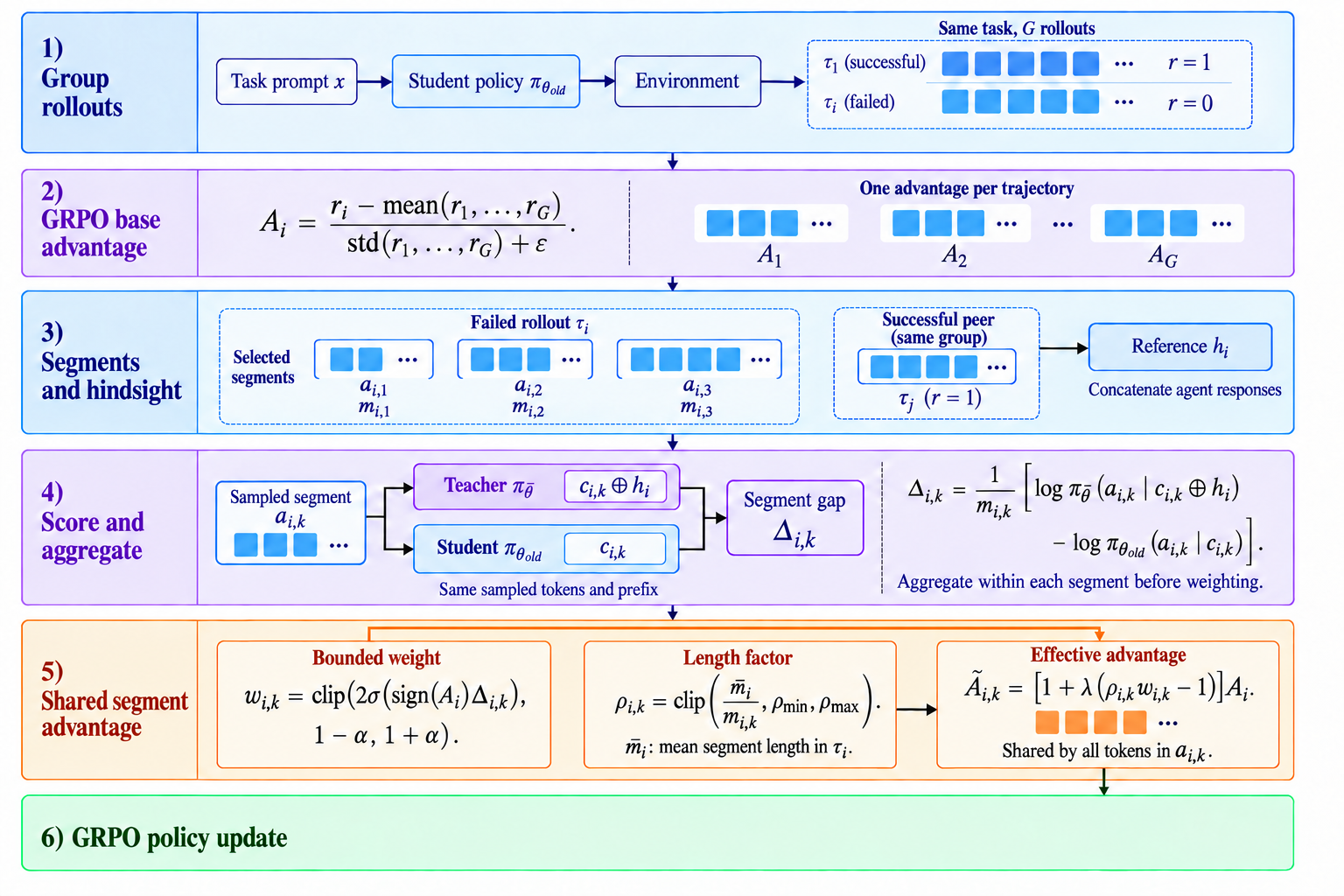}
\caption{\textbf{\method{} Framework.} A successful peer supplies context for a policy-copy teacher. A length-normalized likelihood comparison of each selected segment produces a bounded weight. Length balancing and interpolation with the base advantage then determine the segment's effective advantage for the policy update.}
\label{fig:pipeline}
\end{figure}

\paragraph{Step 1: identify interaction segments.}
An interaction segment is a contiguous span of the agent's response
expressing a coherent unit of behavior, such as an action enclosed in
\texttt{<action>...</action>} or a reasoning passage enclosed in
\texttt{<thinking>...</thinking>}.
For the $i$-th trajectory $\tau_i$, we select $K_i$ nonempty, disjoint segments
for credit assignment.
Let $a_{i,k}$ denote the $k$-th selected segment and
$m_{i,k}=|a_{i,k}|$ its length.
We denote by $c_{i,k}$ the context available to the agent immediately
before generating $a_{i,k}$.
It consists of the task prompt $x$ and all preceding agent-generated
content and environment observations, including any content earlier
in the same response.
We assess and assign credit to each selected segment as a whole.

\paragraph{Step 2: construct a teacher context from successful peers.}
Within each group, we select one successful rollout and construct a reference text by concatenating its agent
responses in interaction order, separated by step labels.
For example, a successful shopping rollout may provide responses
covering product search, option selection, and purchase.
All lower-reward trajectories in the group share this reference text.
We denote the reference used to evaluate trajectory $\tau_i$ by $h_i$. The teacher $\teacher$ is a copy of the trainable student policy
$\student$. Its parameters $\bar\theta$ are copied from the student at each teacher refresh and held fixed until the next refresh.

\paragraph{Step 3: assess each segment.}
The teacher assesses the
student's sampled behavior with additional reference information.
We write $c_{i,k}\oplus h_i$ for the teacher's context, which augments
the original interaction context $c_{i,k}$ with the successful
reference $h_i$.
The student receives only $c_{i,k}$.
Their teacher--student log-probability gap is
\begin{equation}
\label{eq:evidence}
\Delta_{i,k}=\frac1{m_{i,k}}\left[
\log\teacher(a_{i,k}\mid c_{i,k}\oplus h_i)
-\log\student(a_{i,k}\mid c_{i,k})\right].
\end{equation}
The student likelihood is evaluated at the rollout parameters
$\theta=\theta_{\mathrm{old}}$, and the gap is held fixed during
the subsequent policy update.
A positive gap indicates stronger support for the segment from
the reference-informed teacher than from the student, while a
negative gap indicates weaker support.
Such feedback can vary across segments within a failed trajectory,
even though they share the same outcome-derived advantage.
We therefore use the gap to determine each segment's weight in
Step~4, adjusting its contribution to the parameter update according
to the teacher's local feedback.

\paragraph{Step 4: compute segment weights.}
With the sigmoid function $\sigma(z)=(1+e^{-z})^{-1}$, we map the teacher--student gap to a bounded weight:
\begin{equation}
\label{eq:gate}
w_{i,k}=\clip\!\left(2\sigma\!\left(\sign(A_i)\Delta_{i,k}\right),1-\alpha,1+\alpha\right),
\qquad 0<\alpha<1.
\end{equation}
The sigmoid function smoothly converts the signed gap into a positive weight, and the factor of $2$ ensures that zero gap $\Delta_{i, k} = 0$ gives a weight of $1$. To balance the shaped coefficients across segment lengths, we use
\begin{equation}
\label{eq:rho}
\rho_{i,k}=\clip\!\left(\frac{\bar m_i}{m_{i,k}},\rho_{\min},\rho_{\max}\right),
\qquad \bar m_i=\frac1{K_i}\sum_{k=1}^{K_i}m_{i,k},
\end{equation}
where $0<\rho_{\min}\le\rho_{\max}$.
This factor compensates for differences in segment length,
reducing the influence of length on the total credit
assigned to each segment.

\paragraph{Step 5: update the policy.}
We interpolate between the base and reweighted advantages:
\begin{equation}
\label{eq:aeff}
g_{i,k}=1+\lambda(\rho_{i,k}w_{i,k}-1),\qquad
\widetilde A_{i,k}=g_{i,k}A_i,\qquad 0\le\lambda\le1.
\end{equation}
We then update the student parameters $\theta$ by maximizing
the objective in Eq.~\eqref{eq:grpo-obj} with these
segment-level advantages $\widetilde A_{i,k}$.

\subsection{Understanding Segment-Level Credit Assignment}
\label{sec:theory}

We use an ideal success-conditioned reference to explain how hindsight feedback refines credit for individual decisions. Let $u(z)=\clip(2\sigma(z)-1,-\alpha,\alpha)$, a bounded transform that preserves the sign of its input. Its odd symmetry gives
\begin{equation}
\label{eq:credit-decomposition}
\widetilde A_{i,k}=\underbrace{A_i}_{\text{outcome}}
+\underbrace{\lambda A_i(\rho_{i,k}-1)}_{\text{length adjustment}}
+\underbrace{\lambda\rho_{i,k}|A_i|u(\Delta_{i,k})}_{\text{hindsight adjustment}}.
\end{equation}
For a fixed segment, we suppress $(i,k)$ and write $B=(1-\lambda+\lambda\rho)A$ for the coefficient without hindsight. The remaining term, $\lambda\rho|A|u(\Delta)$, raises or lowers this coefficient according to the segment's feedback.

To judge this adjustment, let $\pi=\pi_{\theta_{\mathrm{old}}}$ and consider a segment $a$ of length $m>0$ at context $c$. Define $Q^\pi(c,a)=\Pr_\pi(R=1\mid c,a)$ as the success probability after choosing $a$ and then following $\pi$, where $R=1$ denotes terminal success. The baseline $V^\pi(c)=\Pr_\pi(R=1\mid c)$ is the success probability conditioned on previous context $c$. Their difference, $D^\pi(c,a)=Q^\pi(c,a)-V^\pi(c)$, is the segment's local value. Thus $D^\pi>0$ means choosing $a$ improves success probability, and $D^\pi<0$ means it reduces it.

We now consider the ideal reference $q^\star(a\mid c)=\Pr_\pi(a\mid c,R=1)$. Successful-peer conditioning aims to approximate this assessment for the failed rollouts selected in Step~2. We now use Proposition \ref{prop:hindsight-credit} to explain why the hindsight information is useful for refined credit assignment.

\begin{proposition}
\label{prop:hindsight-credit}
Assume $V^\pi(c)>0$ and $\pi(a\mid c)>0$. Let $\Delta^\star=m^{-1}[\log q^\star(a\mid c)-\log\pi(a\mid c)]$ and $\widetilde A^\star=B+\lambda\rho|A|u(\Delta^\star)$.
\begin{enumerate}[label=(\alph*),leftmargin=*,topsep=3pt,itemsep=3pt,parsep=0pt]
\item The reference gap satisfies
\begin{equation}
\label{eq:hindsight-value}
\frac{q^\star(a\mid c)}{\pi(a\mid c)}=\frac{Q^\pi(c,a)}{V^\pi(c)},
\qquad \sign u(\Delta^\star)=\sign D^\pi(c,a).
\end{equation}
\item For any $\rho>0$, the refined coefficient satisfies
\begin{equation}
\label{eq:conflict-reduction}
[-\widetilde A^\star D^\pi(c,a)]_+\leq[-B D^\pi(c,a)]_+,
\end{equation}
where $[z]_+=\max\{z,0\}$.
\end{enumerate}
\end{proposition}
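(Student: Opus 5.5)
Part (a) is Bayes' rule. Since $V^\pi(c)=\Pr_\pi(R=1\mid c)>0$, the conditional law $q^\star(a\mid c)=\Pr_\pi(a\mid c,R=1)$ is well defined, and
\[
q^\star(a\mid c)=\frac{\Pr_\pi(R=1\mid c,a)\,\pi(a\mid c)}{\Pr_\pi(R=1\mid c)}=\frac{Q^\pi(c,a)\,\pi(a\mid c)}{V^\pi(c)}.
\]
Dividing by $\pi(a\mid c)>0$ gives the ratio identity in Eq.~\eqref{eq:hindsight-value}.

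For the sign claim, we must handle $Q^\pi(c,a)=0$. In that case $q^\star(a\mid c)=0$ and $\Delta^\star=-\infty$. We adopt the conventions $\sigma(-\infty)=0$ and $u(-\infty)=-\alpha<0$, which are consistent with $D^\pi=-V^\pi<0$. Otherwise,
\[
\Delta^\star=m^{-1}\log\bigl(Q^\pi/V^\pi\bigr),
\]
which has the sign of $Q^\pi-V^\pi=D^\pi(c,a)$ because $m>0$ and $\log$ is increasing with $\log 1=0$. It remains to check that $u$ preserves sign. The map $z\mapsto 2\sigma(z)-1=\tanh(z/2)$ is strictly increasing, odd, and vanishes only at $0$. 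Clipping to $[-\alpha,\alpha]$ with $\alpha>0$ cannot change a sign or create a zero. Hence $\sign u(\Delta^\star)=\sign D^\pi(c,a)$.

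For part (b), write $D=D^\pi(c,a)$ and $h=\lambda\rho|A|\,u(\Delta^\star)$, so that $\widetilde A^\star=B+h$. The key observation is that, by part (a), $\sign h\in\{0,\sign D\}$, since $\lambda\rho|A|\ge 0$. Therefore $hD\ge 0$, and
\[
-\widetilde A^\star D=-BD-hD\le -BD.
\]
The map $z\mapsto[z]_+$ is nondecreasing, so $[-\widetilde A^\star D]_+\le[-BD]_+$, which is Eq.~\eqref{eq:conflict-reduction}. The hypothesis $\rho>0$ and $0\le\lambda\le 1$ guarantee the nonnegativity of the prefactor $\lambda\rho|A|$. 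The inequality holds regardless of the sign of $B$; in particular it does not require $1-\lambda+\lambda\rho>0$.

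The only delicate point is the boundary case $Q^\pi(c,a)=0$ in part (a), where $\Delta^\star$ is not finite. I would state the extended-real convention explicitly, or alternatively restrict the sign claim to $Q^\pi>0$ and note that (b) trivially survives under the convention $u(-\infty)=-\alpha$. Everything else is a direct computation.
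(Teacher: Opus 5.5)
Your proof is correct and follows the paper's argument essentially step for step. You use Bayes' rule for the ratio identity, and monotonicity of $\log$ together with sign preservation of $u$ for the sign claim, handling the boundary case $Q^\pi(c,a)=0$ with the same convention $u(\Delta^\star)=-\alpha$ the paper adopts; for (b) you use $(\widetilde A^\star-B)D^\pi(c,a)=\lambda\rho|A|u(\Delta^\star)D^\pi(c,a)\ge0$ combined with monotonicity of $[\cdot]_+$, exactly as in the paper.
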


In Part (a), Equation~\eqref{eq:hindsight-value} shows that, under the ideal reference, $u(\Delta^\star)$ and the segment's local value $D^\pi(c,a)$ have the same direction. Therefore, scaling this signal $u(\Delta^\star)$ by $\lambda\rho|A|$, the hindsight adjustment (cf. Eq.~\eqref{eq:credit-decomposition}) also aligns with the direction of the local value $D^\pi(c,a)$. This means a positive hindsight adjustment identifies a segment that increases the final success probability, while a negative hindsight adjustment identifies one that decreases it. This alignment explains why we use the teacher--student gap as hindsight feedback. After length normalization and the sign-preserving transform $u$, the ideal signal $\Delta^\star$ indicates whether each segment helps or harms task success. This local information helps refine credit assignment within the trajectory.

In Part (b), $B$ stands for the segment's advantage adjustment without hindsight feedback. When we choose the length normalization term $\rho = 1$, $B$ becomes the base GRPO advantage $A$. The quantity $[-BD^\pi(c,a)]_+$ measures the conflict between this coefficient $B$ and the segment's local value $D^{\pi}$. It is positive when $B$ opposes $D^{\pi}$ and zero otherwise. When a useful decision in a failed rollout has $D^\pi(c,a)>0$ but receives the advantage $B<0$, we have $[-BD^\pi(c,a)]_+>0$. This indicates a credit-assignment error, as a useful decision receives negative credit because its trajectory fails. Equation~\eqref{eq:conflict-reduction} shows that ideal hindsight does not worsen this error for any $\rho>0$.

\section{Experiments}
\label{sec:experiments}

We conduct experiments on ALFWorld and WebShop to demonstrate the effectiveness of \method{} through comparisons with existing RL baselines.

\subsection{Setup}
\label{sec:setup-exp}

We fine-tune Qwen2.5-7B-Instruct \citep{qwen2024qwen25} on two interactive benchmarks, ALFWorld \citep{shridhar2021alfworld} and WebShop \citep{yao2022webshop}.
ALFWorld covers six families of household tasks.
WebShop requires agents to search for and purchase products that satisfy user requests.
Both environments use binary terminal success rewards for training.

All methods share the same backbone, prompts, environments, reward function, optimizer, rollout settings, and evaluation protocol.
We train each configuration for $200$ optimization steps with $16$ tasks per batch, a learning rate of $5\times10^{-7}$, and a KL coefficient of $\beta_{\mathrm{KL}}=0.01$.
For \method{}, we set $\alpha=0.2$, $\rho_{\min}=0.25$, and $\rho_{\max}=4$.
We evaluate two mixing weights, $\lambda\in\{0.2,0.05\}$, each held fixed throughout training.

Our primary metric is the final validation success rate at step $200$, evaluated on $128$ held-out tasks.
We report the mean and sample standard deviation over three independent runs.
To examine performance in more detail, we also report subtask success rates by ALFWorld task family and the continuous task score in WebShop, which measures how well the purchased product matches the request.

We compare our \method{} with GRPO \citep{shao2024deepseekmath}, which assigns a single outcome-derived advantage to all generated tokens within each trajectory.
We also consider baselines that combine RL with self-distillation:
RLSD \citep{yang2026rlsd} uses self-distillation gaps to reweight advantages at the token level, while SDAR \citep{lu2026sdar} applies token-level gating to an auxiliary distillation loss.
GRPO+OPSD augments the GRPO objective with the OPSD distillation loss \citep{zhao2026opsd}.
Finally, StepOPSD \citep{zhang2026stepopsd} uses hindsight feedback to compute token-specific advantage weights within selected interaction spans.

\subsection{Main results}
\label{sec:results}

Table~\ref{tab:main-results} reports the overall comparison. \method{} achieves the highest mean final validation success rate on both benchmarks at both tested mixing weights.
Relative to GRPO, the gains are $13.80$--$14.32$ percentage points on ALFWorld and $8.85$--$9.11$ points on WebShop.
Compared with StepOPSD at the corresponding reported $\lambda$ values, \method{} gains $6.77$ points on ALFWorld at both settings and $5.99$--$7.55$ points on WebShop.
These results support the effectiveness of using segment-level self-distillation feedback to refine GRPO credit assignment.

\begin{table}[t]
\hypersetup{hidelinks}
\centering
\setlength{\abovecaptionskip}{7pt}
\setlength{\belowcaptionskip}{0pt}
{\small
\setlength{\tabcolsep}{4pt}
\renewcommand{\arraystretch}{1.20}
\begin{tabular*}{\linewidth}{@{\extracolsep{\fill}}lcrcr@{}}
\toprule
& \multicolumn{2}{c}{ALFWorld} & \multicolumn{2}{c}{WebShop} \\
\cmidrule(lr){2-3}\cmidrule(l){4-5}
Method & Success rate (\%) & Gain & Success rate (\%) & Gain \\
\midrule
GRPO & $70.57 \pm 7.83$ & -- & $66.15 \pm 1.19$ & -- \\
SDAR & $67.19 \pm 9.50$ & $-3.39$ & $61.72 \pm 6.10$ & $-4.43$ \\
RLSD & $73.96 \pm 5.76$ & $+3.39$ & $69.01 \pm 6.31$ & $+2.86$ \\
GRPO+OPSD & $72.92 \pm 4.51$ & $+2.34$ & $69.01 \pm 3.16$ & $+2.86$ \\
StepOPSD ($\lambda=0.2$) & $77.60 \pm 2.96$ & $+7.03$ & $69.27 \pm 1.63$ & $+3.12$ \\
StepOPSD ($\lambda=0.05$) & $78.12 \pm 3.41$ & $+7.55$ & $67.45 \pm 4.01$ & $+1.30$ \\
\method{} ($\lambda=0.2$) & \textcolor{saBlue}{$84.38 \pm 2.82$} & \textcolor{saBlue}{$+13.80$} & \textcolor{saRed}{$75.26 \pm 1.19$} & \textcolor{saRed}{$+9.11$} \\
\method{} ($\lambda=0.05$) & \textcolor{saRed}{$84.90 \pm 1.19$} & \textcolor{saRed}{$+14.32$} & \textcolor{saBlue}{$75.00 \pm 1.35$} & \textcolor{saBlue}{$+8.85$} \\
\bottomrule
\end{tabular*}
\par}
\caption{\textbf{Final validation success rate (\%)} at step 200 with Qwen2.5-7B-Instruct, mean $\pm$ sample standard deviation over three independent runs. Gain is the change in mean success rate relative to GRPO, in percentage points. \textcolor{saRed}{Red} and \textcolor{saBlue}{blue} indicate the best and second-best mean results in each column, respectively.}
\label{tab:main-results}
\end{table}

The improvements persist across both tested mixing weights.
Although $\lambda=0.2$ and $\lambda=0.05$ differ by a factor of four, their mean success rates differ by only $0.52$ percentage points on ALFWorld and $0.26$ points on WebShop.
Across the three independent runs, \method{} has lower variability than GRPO on ALFWorld, with standard deviations of $2.82$ and $1.19$ points compared with $7.83$.
On WebShop, its standard deviations of $1.19$ and $1.35$ points are comparable to GRPO's $1.19$.
The higher mean success rates therefore accompany lower run-to-run variability on ALFWorld and similar variability on WebShop.

\subsection{Fine-grained analysis}
\label{sec:finegrained}

\paragraph{Subtask performance.}
\label{sec:task-outcomes}

Table~\ref{tab:alfworld-subtasks} shows that \method{} performs
strongly across the six ALFWorld task families.
At both tested mixing weights $\lambda = 0.2$ and $\lambda = 0.05$, it outperforms all evaluated
baselines on Pick, Clean, Cool, and Pick2, and remains within
$0.30$ percentage points of the best baseline on Look.
These broad gains persist across the two mixing weights,
which differ by a factor of four.

The improvements extend to complicated tasks such as Heat, Cool, and Pick2, which requires object-state changes
or the handling of multiple objects.
On Pick2, \method{} exceeds the strongest baseline by $7.02$
and $10.10$ percentage points at $\lambda=0.2$ and $0.05$,
respectively.
On Cool, $\lambda=0.2$ improves over the strongest baseline
by $6.89$ percentage points.
On Heat, $\lambda=0.05$ exceeds SDAR, the strongest baseline
for this task, by $6.67$ percentage points.
These tasks require several coordinated decisions, and a final
failure can follow both useful intermediate actions and local
mistakes.
The gains are consistent with the motivation for segment-level
credit assignment, which uses successful-peer feedback to
differentiate the training weights of intermediate decisions.

\begin{table}[!htbp]
\hypersetup{hidelinks}
\centering
\setlength{\abovecaptionskip}{7pt}
\setlength{\belowcaptionskip}{0pt}
{\small
\setlength{\tabcolsep}{4pt}
\renewcommand{\arraystretch}{1.20}
\begin{tabular*}{\linewidth}{@{\extracolsep{\fill}}lrrrrrrr@{}}
\toprule
Method & Pick & Look & Clean & Heat & Cool & Pick2 & All \\
\midrule
GRPO & $89.22$ & $73.33$ & $81.39$ & $61.90$ & $58.21$ & $60.06$ & $70.57$ \\
SDAR & $80.94$ & $82.22$ & $76.55$ & \textcolor{saBlue}{$71.43$} & $54.68$ & $48.00$ & $67.19$ \\
RLSD & $89.65$ & $74.44$ & $88.49$ & $55.24$ & $55.04$ & $70.71$ & $73.96$ \\
GRPO+OPSD & $90.95$ & $81.72$ & $77.92$ & $39.05$ & $62.29$ & $66.04$ & $72.92$ \\
StepOPSD ($\lambda=0.2$) & $90.96$ & $83.33$ & $83.50$ & $52.86$ & $68.46$ & $75.38$ & $77.60$ \\
StepOPSD ($\lambda=0.05$) & $90.32$ & \textcolor{saRed}{$85.86$} & $88.60$ & $69.52$ & $65.70$ & $64.67$ & $78.12$ \\
\method{} ($\lambda=0.2$) & \textcolor{saBlue}{$93.73$} & \textcolor{saBlue}{$85.56$} & \textcolor{saBlue}{$91.49$} & $61.67$ & \textcolor{saRed}{$75.35$} & \textcolor{saBlue}{$82.40$} & \textcolor{saBlue}{$84.38$} \\
\method{} ($\lambda=0.05$) & \textcolor{saRed}{$96.00$} & \textcolor{saBlue}{$85.56$} & \textcolor{saRed}{$92.68$} & \textcolor{saRed}{$78.10$} & \textcolor{saBlue}{$68.82$} & \textcolor{saRed}{$85.48$} & \textcolor{saRed}{$84.90$} \\
\bottomrule
\end{tabular*}
\par}
\caption{\textbf{ALFWorld success rate (\%) by task family} at step 200, mean over three independent runs. Family rates are averaged separately; \emph{All} reports the mean overall validation success rate from Table~\ref{tab:main-results}. \textcolor{saRed}{Red} and \textcolor{saBlue}{blue} indicate the best and second-best mean results in each column, respectively.}
\label{tab:alfworld-subtasks}
\end{table}

\paragraph{Product-matching quality in WebShop.}

Success rate measures whether a purchase fully satisfies the shopping request.
WebShop's task score provides a finer-grained assessment of how well the purchase meets that request, giving credit to partial matches as well as complete success \citep[Section~3.1]{yao2022webshop}. The score of one evaluation episode is
\begin{equation}
\label{eq:webshop-score}
S=r_{\mathrm{type}}\,
\frac{|U_{\mathrm{att}}\cap Y_{\mathrm{att}}|+|U_{\mathrm{opt}}\cap Y_{\mathrm{opt}}|
+\mathbf{1}\{y_{\mathrm{price}}\leq u_{\mathrm{price}}\}}
{|U_{\mathrm{att}}|+|U_{\mathrm{opt}}|+1}.
\end{equation}
Here $U_{\mathrm{att}}$ and $U_{\mathrm{opt}}$ are the requested attributes and options respectively. $Y_{\mathrm{att}}$ and $Y_{\mathrm{opt}}$ describe the purchased product and selected options respectively. The intersections count matches under the benchmark's rules. The product price $y_{\mathrm{price}}$ is compared with the budget $u_{\mathrm{price}}$, and $r_{\mathrm{type}}\in[0,1]$ measures product-type matching. Episodes without a purchase receive zero. At training step $200$, we average $S$ over all $128$ validation episodes to obtain each run's task score. We then report the mean of these scores on a $[0,1]$ scale. Task score captures how well a purchase satisfies the request, even when success rate counts it as a failure. For example, a shirt may match the requested material, color, and budget but have the wrong size. A second shirt may meet the same budget but have the wrong size, material, and color. Both purchases count as failures, but task score credits the additional requirements satisfied by the first.

Table~\ref{tab:task-quality} shows gains over StepOPSD of $0.0591$ and $0.0212$ at $\lambda=0.2$ and $0.05$, respectively. Both settings also exceed GRPO. Thus, higher success rates are accompanied by better average satisfaction of the shopping requirements. Unlike StepOPSD’s token-specific weights, our method assigns a shared hindsight-derived weight to each complete interaction segment. The higher task scores are consistent with evaluating search, option-selection, and purchase decisions at this shared granularity.

\begin{table}[!htbp]
\hypersetup{hidelinks}
\centering
\begin{minipage}{0.60\linewidth}
\setlength{\abovecaptionskip}{7pt}
\setlength{\belowcaptionskip}{0pt}
{\small
\setlength{\tabcolsep}{6pt}
\renewcommand{\arraystretch}{1.20}
\begin{tabular*}{\linewidth}{@{\extracolsep{\fill}}lr@{}}
\toprule
Method & Task score\\
\midrule
GRPO & $0.7781$\\
StepOPSD ($\lambda=0.2$) & $0.8231$\\
StepOPSD ($\lambda=0.05$) & $0.8268$\\
\method{} ($\lambda=0.2$) & \textcolor{saRed}{$0.8822$}\\
\method{} ($\lambda=0.05$) & \textcolor{saBlue}{$0.8480$}\\
\bottomrule
\end{tabular*}
\par}
\caption{\textbf{WebShop task score.} Higher is better. \textcolor{saRed}{Red} and \textcolor{saBlue}{blue} mark the best and second-best results among the listed methods.}
\label{tab:task-quality}
\end{minipage}
\end{table}

\section{Related Work}
\label{sec:related}

Outcome-supervised RL has been applied to reasoning models \citep{shao2024deepseekmath,deepseek2025r1} and multi-turn language agents \citep{jin2025search,wang2025ragen}. Actor-critic implementations of PPO \citep{schulman2017proximal} commonly pair a learned value function with generalized advantage estimation (GAE) \citep{schulman2016gae}, which combines temporal-difference residuals across time in a construction related to eligibility traces. ArCHer treats utterances as high-level actions and uses off-policy critic learning to obtain utterance-level advantages for token-level policy updates \citep{zhou2024archer}. Other approaches use recurring environment states \citep{feng2025gigpo}, Monte Carlo resampling \citep{kazemnejad2025vineppo,guo2025spo}, or process reward models \citep{lightman2023let,wang2024mathshepherd}. POAD studies action-level optimization \citep{wen2024poad}. TREK improves exploration on hard prompts through distillation of verified proposal trajectories followed by GRPO refinement \citep{xu2026trek}. TRIAGE uses an LLM judge to assign semantic roles to environment-facing segments, then adds predefined role-specific rewards to their GRPO advantages \citep{xu2026triage}. In comparison, our method \method{} retains GRPO's critic-free trajectory advantage and obtains local feedback by scoring sampled segments under student and successful-peer-conditioned teacher policies. The resulting log-probability gaps determine bounded weights shared within each selected segment, allowing the adjustment to depend on its hindsight feedback without predefined role categories or role-specific reward constants. This provides segment-specific credit without fitting a critic to expected task returns or using temporal-difference backups, external judges, or additional environment rollouts.

This use of successful-peer context is related to context distillation, which transfers behavior induced by privileged context \citep{snell2022context}. On-policy distillation evaluates the student's own samples \citep{agarwal2024onpolicy}. OPSD uses the same model under different conditioning contexts \citep{zhao2026opsd}. RLSD applies self-distillation gaps to token-level advantage magnitudes \citep{yang2026rlsd}; SDAR gates an auxiliary distillation objective \citep{lu2026sdar}; and StepOPSD uses successful-peer hindsight for token-specific weighting within action-centered spans \citep{zhang2026stepopsd}. \method{} instead aggregates feedback over each selected segment before weighting its GRPO advantage. Concurrent AgentOPSD aggregates feedback over full turns using a retrieved skill and a recursive belief update \citep{wang2026agentopsd}; our method uses independently scored environment-facing segments and successful peers from the current group.

\section{Conclusion}
\label{sec:conclusion}

In long-horizon tasks, RL agents often receive rewards only when an episode ends. These sparse rewards provide little direct feedback on which intermediate decisions helped or harmed task completion. \method{} addresses this credit-assignment problem through successful-peer hindsight. It aggregates teacher--student log-probability gaps within complete interaction segments before assigning each segment a shared, bounded advantage weight, providing differentiated feedback without a learned critic.

Our analysis explains how the hindsight correction aligns with local decision value under an ideal success-conditioned reference. On ALFWorld and WebShop, \method{} achieves higher mean final success than the baselines at both tested mixing weights. These results support supplementing sparse outcome supervision with segment-level hindsight feedback for long-horizon agent learning.

\bibliographystyle{unsrtnat}
\bibliography{references}

\appendix
\section{Theory of segment-level credit refinement}
\label{app:credit-theory}

We use the notation of Section~\ref{sec:theory}. Sampled trajectories, base advantages, teacher feedback, and segment weights are held fixed during each policy update. Unless indices are needed, we analyze one selected segment of length $m>0$, with $\pi=\pi_{\theta_{\mathrm{old}}}$.

\subsection{Value-aligned credit correction}
\label{app:hindsight}

\begin{proof}[Proof of Proposition~\ref{prop:hindsight-credit}]
The function $u(z)=\clip(2\sigma(z)-1,-\alpha,\alpha)$ is odd, preserves sign, and satisfies $|u(z)|\leq\alpha<1$. Thus
\[
w-1=u(\sign(A)\Delta),\qquad A(w-1)=|A|u(\Delta).
\]
Substituting these identities into Eq.~\eqref{eq:aeff} gives
\[
\widetilde A=(1-\lambda+\lambda\rho)A+\lambda\rho|A|u(\Delta),
\]
which proves Eq.~\eqref{eq:credit-decomposition}.

For the success-conditioned reference, Bayes' rule gives
\[
q^\star(a\mid c)
=\frac{\Pr_\pi(a,R=1\mid c)}{\Pr_\pi(R=1\mid c)}
=\pi(a\mid c)\frac{Q^\pi(c,a)}{V^\pi(c)}.
\]
When $Q^\pi(c,a)>0$, monotonicity of the logarithm implies
\[
\sign\Delta^\star
=\sign\!\left(\log\frac{Q^\pi(c,a)}{V^\pi(c)}\right)
=\sign D^\pi(c,a).
\]
Since $u$ preserves sign, Eq.~\eqref{eq:hindsight-value} follows. When $Q^\pi(c,a)=0$, the same conclusion holds with $u(\Delta^\star)=-\alpha$. The correction $\lambda\rho|A|u(\Delta^\star)$ consequently has the direction of $D^\pi(c,a)$ whenever it is nonzero.

For part (b), write $D=D^\pi(c,a)$. Part (a) gives $u(\Delta^\star)D\geq0$. For any $\rho>0$ and $0\leq\lambda\leq1$,
\[
(\widetilde A^\star-B)D
=\lambda\rho|A|u(\Delta^\star)D\geq0.
\]
Thus $-\widetilde A^\star D\leq-BD$. Since $z\mapsto[z]_+$ is nondecreasing,
\[
[-\widetilde A^\star D]_+\leq[-BD]_+,
\]
which proves Eq.~\eqref{eq:conflict-reduction}, including the cases $A=0$, $D=0$, or $\lambda=0$.

To see how this correction acts within a failed rollout, take $A<0$ and $\lambda>0$. Then $B<0$, and odd symmetry gives
\[
\widetilde A^\star
=A\bigl[1-\lambda+\lambda\rho(1-u(\Delta^\star))\bigr]<0,
\]
since $1-u(\Delta^\star)\geq1-\alpha>0$. If $D>0$, part (a) gives a positive correction, so $B<\widetilde A^\star<0$. If $D<0$, the correction is negative, so $\widetilde A^\star<B<0$. These relations establish the weaker and stronger penalties described in the main text at the same length factor.
\end{proof}

\subsection{Computing segment feedback}
\label{app:segment-equivalence}

Let $a_{i,k}=(z_1,\ldots,z_m)$ be a sampled segment with $m=m_{i,k}>0$. Assume both policies assign positive likelihood to it under their respective contexts. The teacher and student use the same sampled prefix $z_{<j}$, with the student evaluated at $\theta_{\mathrm{old}}$ and the teacher additionally conditioned on $h_i$. By autoregressive factorization, the length-normalized segment gap in Eq.~\eqref{eq:evidence} equals the average of its conditional token gaps:
\begin{equation}
\label{eq:segment-evidence-sum}
\begin{split}
\Delta_{i,k}&=\frac1m\sum_{j=1}^m\delta_j,\\
\delta_j&=\log\teacher(z_j\mid c_{i,k}\oplus h_i,z_{<j})
-\log\pi_{\theta_{\mathrm{old}}}(z_j\mid c_{i,k},z_{<j}).
\end{split}
\end{equation}

\subsection{Length-balancing rationale}
\label{app:proof-mass}

We compare coefficient totals over $K_i>0$ nonempty selected segments within one trajectory, omitting their common normalization $1/T_i$. Equation~\eqref{eq:aeff} gives $\widetilde A_{i,k}=(1-\lambda)A_i+\lambda\rho_{i,k}w_{i,k}A_i$, with $0\leq\lambda\leq1$. Without length balancing, the reweighted component accumulates absolute coefficient $\lambda m_{i,k}w_{i,k}|A_i|$, which contains an explicit factor of segment length. Suppose all length factors are unclipped, so $\rho_{i,k}=\bar m_i/m_{i,k}$ with $\bar m_i=K_i^{-1}\sum_{k=1}^{K_i}m_{i,k}$. Then
\begin{equation}
\label{eq:segment-length-balance}
\lambda m_{i,k}\rho_{i,k}w_{i,k}|A_i|
=\lambda\bar m_iw_{i,k}|A_i|.
\end{equation}
Thus, equally weighted segments in the same trajectory receive the same accumulated coefficient in the reweighted component, regardless of length. The base component $(1-\lambda)A_i$ remains unchanged. This balances coefficients in the reweighted component, not the full update or segment gradient norms.

The numerator $\bar m_i$ preserves the overall coefficient scale. With $\bar w_i=K_i^{-1}\sum_{k=1}^{K_i}w_{i,k}$, substitution into Eq.~\eqref{eq:aeff} and $\sum_k m_{i,k}=K_i\bar m_i$ give
\begin{equation}
\label{eq:mass}
\sum_{k=1}^{K_i}m_{i,k}\widetilde A_{i,k}
=\bigl[1+\lambda(\bar w_i-1)\bigr]
\sum_{k=1}^{K_i}m_{i,k}A_i.
\end{equation}
When all $w_{i,k}=1$, the total advantage coefficient over selected segments is unchanged. Weight clipping in Eq.~\eqref{eq:gate} remains in effect, so $w_{i,k}\in[1-\alpha,1+\alpha]$ and the common multiplier lies in $[1-\lambda\alpha,1+\lambda\alpha]$. Compared with using the unclipped factor $1/m_{i,k}$ alone, the numerator $\bar m_i$ restores the reweighted component's overall coefficient scale.

If a length factor is clipped, Eqs.~\eqref{eq:segment-length-balance} and~\eqref{eq:mass} need not hold exactly. Under the parameter ranges in Eqs.~\eqref{eq:gate}--\eqref{eq:aeff}, $\rho_{i,k}$ and $w_{i,k}$ remain positive and bounded. Hence $g_{i,k}$ is positive and bounded, and $\widetilde A_{i,k}$ preserves the sign of $A_i$, including $\widetilde A_{i,k}=0$ when $A_i=0$.

\end{document}